\newtheorem{theorem}{Theorem}
\newtheorem{lemma}{Lemma}
\icmltitlerunning{PAC-Bayesian Analysis of the Exploration-Exploitation Trade-off}
\begin{document} 

\twocolumn[
\icmltitle{PAC-Bayesian Analysis of the Exploration-Exploitation Trade-off}

% It is OKAY to include author information, even for blind
% submissions: the style file will automatically remove it for you
% unless you've provided the [accepted] option to the icml2011
% package.
\icmlauthor{Yevgeny Seldin}{seldin@tuebingen.mpg.de}
\icmladdress{Max Planck Institute for Intelligent Systems,
            T\"{u}bingen, Germany}
\icmlauthor{Nicol{\`o} Cesa-Bianchi}{nicolo.cesa-bianchi@unimi.it}
\icmladdress{Dipartimento di Scienze dell''Informazione,
            Universit{\`a} degli Studi di Milano, Italy}
\icmlauthor{Fran\c{c}ois Laviolette}{francois.laviolette@ift.ulaval.ca}
\icmladdress{Universit\'{e} Laval,
            Qu\'{e}bec, Canada}
\icmlauthor{Peter Auer}{auer@unileoben.ac.at}
\icmladdress{Chair for Information Technology,
            University of Leoben, Austria}
\icmlauthor{John Shawe-Taylor}{jst@cs.ucl.ac.uk}
\icmladdress{University College London,
            UK}
\icmlauthor{Jan Peters}{jan.peters@tuebingen.mpg.de}
\icmladdress{Max Planck Institute for Intelligent Systems,
            T\"{u}bingen, Germany}

% You may provide any keywords that you 
% find helpful for describing your paper; these are used to populate 
% the "keywords" metadata in the PDF but will not be shown in the document
\icmlkeywords{PAC-Bayes, Martingales, Multiarmed Bandits, Exploration-Exploitation}

\vskip 0.3in
]

\begin{abstract} 
We develop a coherent framework for integrative simultaneous analysis of the exploration-exploitation and model order selection trade-offs. We improve over our preceding results on the same subject \cite{SLST+11} by combining PAC-Bayesian analysis with Bernstein-type inequality for martingales. Such a combination is also of independent interest for studies of multiple simultaneously evolving martingales.
\end{abstract} 

\section{Introduction}

The trade-off between exploration and exploitation is a fundamental question in reinforcement learning. Model order selection, which is a trade-off between model complexity and its empirical data fit, is even a more basic question in machine learning. To the best of our knowledge, we develop the first framework that enables to consider these two trade-offs simultaneously from a finite sample perspective. The importance of simultaneous consideration of the two trade-offs can be illustrated by the following simple example. Imagine we have a web page, where we can show a visitor a single advertisement out of a pool of advertisements. Assume that we are given access to additional side information about the visitors, which we are allowed to use in our choice of advertisements (this is generally known as contextual bandits problem). Further, imagine that the amount of available (contextual) side information is very large (and potentially unlimited). Considering all side information from the beginning will result in an overcomplicated model that will take prohibitively many trials to learn. Instead, similar to supervised learning, we should start with a simple model and increase its complexity as our experience grows. However, unlike in supervised learning, we have to learn under limited feedback. This means that the model order selection trade-off has to be considered simultaneously with the exploration-exploitation trade-off. We develop an integrative framework that provides finite sample guarantees for both trade-offs simultaneously.

Our solution is based on extending PAC-Bayesian analysis of supervised learning with i.i.d. samples to problems with limited feedback and sequentially dependent samples. PAC-Bayesian analysis was introduced over a decade ago \cite{STW97,ST+98,McA98,See02} and has since made a significant contribution to the analysis and development of supervised learning methods. The power of PAC-Bayesian approach lies in successful marriage of the flexibility and intuitiveness of Bayesian models with the rigor of PAC analysis. PAC-Bayesian bounds provide an explicit and often intuitive and easy-to-optimize trade-off between model complexity and empirical data fit, where the complexity can be nailed down to the resolution of individual hypotheses via the prior definition. The PAC-Bayesian analysis was applied to derive generalization bounds and new algorithms for linear classifiers and maximum margin methods \cite{LST02, McA03b, GLLM09}, structured prediction \cite{McA07}, and clustering-based classification models \cite{ST10}, to name just a few. However, the application of PAC-Bayesian analysis beyond the supervised learning domain remained surprisingly limited. In fact, the only additional domain known to us is density estimation \cite{ST10, HST10}.

Some potential advantages of applying PAC-Bayesian analysis in reinforcement learning were recently pointed out by several researchers, including \citet{TP10} and \citet{FP10}. \citet{TP10} suggested that the mutual information between states and actions in a policy can be used as a natural regularizer in reinforcement learning. They showed that regularization by mutual information can be incorporated into Bellman equations and thereby computed efficiently. Tishby and Polani conjectured that PAC-Bayesian analysis can be applied to justify such form of regularization and provide generalization guarantees for it.

\citet{FP10} suggested a PAC-Bayesian analysis of batch reinforcement learning. However, batch reinforcement learning does not involve the exploration-exploitation trade-off.

One of the reasons for the difficulty of applying PAC-Bayesian analysis to address the exploration-exploitation trade-off is the limited feedback (the fact that we only observe the reward for the action taken, but not for all the rest). In supervised learning (and also in density estimation) the empirical error for each hypothesis within a hypotheses class can be evaluated on all the samples and therefore the size of the sample available for evaluation of all the hypotheses is the same (and usually relatively large). In the situation of limited feedback the sample from one action cannot be used to evaluate another action and the sample size of ``bad'' actions has to increase sublinearly in the number of game rounds. In \cite{SLST+11} we resolved this issue by applying weighted sampling strategy \cite{SB98}, which is commonly used in the analysis of non-stochastic bandits \cite{ACB+02}, but has not been applied to the analysis of stochastic bandits previously.

The usage of weighted sampling introduces two new difficulties. One is sequential dependence of the samples: the rewards we observe influence the distribution over actions we play and through this distribution influence the variance of the subsequent weighted sample variables. In \cite{SLST+11} we handled this dependence by combining PAC-Bayesian analysis with Hoeffding-Azuma-type inequalities for martingales. 

The second problem introduced by weighted sampling is the growing variance of the weighted sample variables. We did not succeed to take full control over the variance in \cite{SLST+11} and the bound we obtained there depended on $1/\varepsilon_t$, where $\varepsilon_t$ is the minimal probability for sampling any action at time step $t$. Here we improve this dependence to $1/\sqrt{\varepsilon_t}$ by combining PAC-Bayesian analysis with Bernstein-type inequality for martingales. This improvement enables to tighten the regret bounds from $O(K^{1/2} t^{3/4})$ to $O(K^{1/3} t^{2/3})$, where $K$ is the number of arms and $t$ is the game round. The combination PAC-Bayesian analysis with Bernstein-type inequality for martingales is also of independent interest for studies of multiple simultaneously evolving martingales.

At the end of Section \ref{sec:main} we suggest possible ways to tighten the analysis further to get $O(\sqrt{Kt})$ regret bounds. These further improvements will be studied in detail in future work.

We emphasize that although this paper is focused on the multiarmed bandit problem, our main goal is not improving existing bounds for stochastic multiarmed bandits, which are already tight up to $\sqrt{\ln(K)}$ factors \cite{AB09,AO10}, but rather developing a new powerful tool for reinforcement learning in domains with a richer structure. For example, \citet{BLL+10} suggested $O\left(\sqrt{Kt \ln(N/\delta)}\right)$ and $O\left(\sqrt{t(d\ln t - \ln \delta)}\right)$ regret bounds for learning with expert advice in the bandit setting, where $N$ is the number of experts (in case it is finite) and $d$ is the VC-dimension of the set of experts (in case it is infinite). We believe that PAC-Bayesian analysis should enable to replace $\ln(N)$ and $d$ factors with $KL(\rho\|\mu)$, where $\rho(h)$ is a distribution over experts played by the algorithm and $\mu(h)$ is a prior distribution over experts that, for example, can reflect their complexity, and $KL$ is the $KL$-divergence. Such an approach is much more flexible, since it allows individual treatment of different experts (or policies) via the prior definition $\mu$ and can be applied to both finite and infinite policy spaces (or expert sets). Our experience in supervised learning shows that PAC-Bayesian analysis is also handful for treating tree-shaped graphical models (since $KL$-divergence decomposes into sum of $KL$-s according to the tree structure). This property can also be useful for contextual bandits and other reinforcement learning problems.

The subsequent sections are organized as follows: Section \ref{sec:main} surveys the main results of the paper and Section \ref{sec:dis} discusses the results. All the proofs are provided in the appendix.

\section{Main Results}
\label{sec:main}

We start with a general concentration result for martingales, which is based on combination of PAC-Bayesian analysis with Bernstein-type inequality for martingales. We apply this result to derive an instantaneous (per-round) generalization bound for the multiarmed bandit problem. This result is in turn applied to derive an instantaneous regret bound for the multiarmed bandits.

\subsection{PAC-Bayes-Bernstein Inequality for Martingales}

In order to present our concentration result for martingales we need a few definitions. Let ${\cal H}$ be an index (or a hypothesis) space, possibly uncountably infinite. Let $\{X_1(h), X_2(h), ...\}$ be martingale difference sequences, meaning that $\mathbb E [X_t(h)|{\cal T}_{t-1}] = 0$, where ${\cal T}_t = \{X_\tau(h)\}_{\substack{1\leq \tau \leq t,\\h \in {\cal H}}}$ is a set of martingale differences observed up to time $t$. ($\{X_t(h)\}_{h \in {\cal H}}$ do not have to be independent, we only need that the requirement on the conditional expectation is satisfied.) Let $M_t(h) = \sum_{\tau = 1}^t X_\tau(h)$ be martingales. Let $V_t(h) = \sum_{\tau=1}^t \mathbb E [X_\tau(h)^2|{\cal T}_{\tau-1}]$ be cumulative variances of the martingales. For a distribution $\rho$ over ${\cal H}$ define $M_t(\rho) = \mathbb E_{\rho(h)} [M_t(h)]$ and $V_t(\rho) = \mathbb E_{\rho(h)} [V_t(h)]$.

\begin{theorem}[PAC-Bayes-Bernstein Inequality]
\label{thm:PAC-Bayes-Bernstein}
Assume that $|X_t(h)| \leq C$ for all $t$ and $h$. Let $\{\mu_1, \mu_2,...\}$ be a sequence of ``reference'' $($``prior''$)$ distributions over ${\cal H}$, such that $\mu_t$ is independent of ${\cal T}_t$ $($but can depend on $t$$)$. Let $\{\bar V_1, \bar V_2, ...\}$ be a sequence of arbitrary numbers, such that $\bar V_t$ is independent of ${\cal T}_t$ $($but can depend on $t$$)$ and satisfy$:$
\begin{equation}
\label{eq:technical}
\sqrt{\frac{L_t}{(e-2) \bar V_t}} \leq \frac{1}{C},
\end{equation}
where
\[
L_t = 2 \ln(t+1) + \ln \frac{2}{\delta}.
\]
Then for all possible distributions $\rho_t$ over ${\cal H}$ given $t$ and for all $t$ simultaneously$:$
\begin{equation}
\label{eq:PAC-Bayes-Bernstein}
|M_t(\rho_t)| \leq \sqrt{(e-2)} \left (
\begin{array}{l}
KL(\rho_t\|\mu_t)\sqrt{\frac{\bar V_t}{L_t}}\\
\quad + V_t(\rho_t) \sqrt{\frac{L_t}{\bar V_t}}
+ \sqrt{L_t \bar V_t}
\end{array}
\right ).
\end{equation}
\end{theorem}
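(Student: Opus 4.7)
The strategy is the standard PAC-Bayesian recipe adapted to martingales: build an exponential supermartingale whose moment generating function can be controlled via a Bernstein-type argument, then transfer the bound from the prior $\mu_t$ to an arbitrary posterior $\rho_t$ by the Donsker--Varadhan change-of-measure inequality, and finally use a time-union bound together with a careful choice of the temperature parameter $\lambda$.

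First, I would establish a pointwise Bernstein-type exponential moment bound: for every fixed $h$ and every deterministic $\lambda$ with $|\lambda|\le 1/C$, the process
\[
Z_t^{\lambda}(h)=\exp\!\bigl(\lambda M_t(h)-(e-2)\lambda^2 V_t(h)\bigr)
\]
is a supermartingale with respect to $\{{\cal T}_t\}$, so $\mathbb{E}[Z_t^{\lambda}(h)]\le 1$. This uses the elementary inequality $e^x\le 1+x+(e-2)x^2$ for $|x|\le 1$ applied conditionally to $e^{\lambda X_t(h)}$, exactly as in Freedman's argument. Because $\mu_t$ is independent of ${\cal T}_t$, Fubini gives $\mathbb{E}\bigl[\mathbb{E}_{\mu_t(h)}[Z_t^{\lambda}(h)]\bigr]\le 1$.

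Next I would invoke the Markov/change-of-measure step. Set $\delta_t=\frac{\delta/2}{(t+1)^2}$, so that $\sum_{t\ge 1}\delta_t\le \delta/2$. By Markov's inequality and a union bound over $t\ge 1$ and over the two signs $\pm\lambda$, with probability at least $1-\delta$ we have, simultaneously for all $t$,
\[
\ln\mathbb{E}_{\mu_t(h)}\!\bigl[Z_t^{\pm\lambda}(h)\bigr]\ \le\ \ln(1/\delta_t)=L_t.
\]
For any posterior $\rho_t$ (which may depend on ${\cal T}_t$) the Donsker--Varadhan variational formula then yields
\[
\pm\lambda M_t(\rho_t)-(e-2)\lambda^2 V_t(\rho_t)\ \le\ KL(\rho_t\|\mu_t)+L_t,
\]
and therefore
\[
|M_t(\rho_t)|\ \le\ \frac{KL(\rho_t\|\mu_t)+L_t}{\lambda}+(e-2)\lambda\,V_t(\rho_t).
\]

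Finally, I would choose $\lambda=\sqrt{L_t/((e-2)\bar V_t)}$, which is legal because $\bar V_t$ and $L_t$ are both ${\cal T}_t$-independent and assumption \eqref{eq:technical} is exactly the condition $|\lambda|\le 1/C$ needed for the Bernstein step. Plugging this $\lambda$ into the previous display and collecting the three resulting terms gives directly the bound \eqref{eq:PAC-Bayes-Bernstein}.

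\textbf{Main obstacle.} The delicate point is not the Bernstein moment bound itself but the interaction between the choice of $\lambda$ and the measurability of $\bar V_t$ and $\mu_t$. The change-of-measure inequality must be applied with a prior and a temperature that are fixed before observing ${\cal T}_t$; otherwise $\mathbb{E}_{\mu_t}[Z_t^{\lambda}]$ cannot be pulled out of the outer expectation. This is precisely why the theorem requires $\mu_t$ and $\bar V_t$ to be ${\cal T}_t$-independent, and it is also what forces us to tune $\lambda$ through a surrogate $\bar V_t$ rather than through the random $V_t(\rho_t)$ --- handing back the tuning task to the user at the cost of the extra $\sqrt{L_t\bar V_t}$ slack term in the bound. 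The remaining bookkeeping (two-sided union bound and the $\sum 1/(t+1)^2$ time-union bound absorbed into $L_t$) is routine.
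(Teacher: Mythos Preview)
Your proposal is correct and follows essentially the same route as the paper: a Bernstein-type supermartingale bound for each $h$, Fubini to pass the expectation through the prior $\mu_t$, Markov plus a time-union bound to control $\ln\mathbb{E}_{\mu_t}[Z_t^{\pm\lambda}]$ by $L_t$, the Donsker--Varadhan change-of-measure inequality to transfer to an arbitrary $\rho_t$, and finally the data-independent choice $\lambda=\sqrt{L_t/((e-2)\bar V_t)}$. The only cosmetic differences are the order in which Markov and the expectation swap are applied, and the precise choice of $\delta_t$ (you use $\frac{\delta/2}{(t+1)^2}$ whereas the paper uses $\frac{\delta}{2t(t+1)}$ and then bounds $\ln(t(t+1))\le 2\ln(t+1)$); both lead to the same $L_t$ and the same final bound.
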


\subsection{Application to the Multiarmed Bandit Problem}

In order to apply our result to the multiarmed bandit problem we need some more definitions. Let ${\cal A}$ be a set of actions (arms) of size $|{\cal A}| = K$ and let $a \in {\cal A}$ denote the actions. Denote by $R(a)$ the expected reward of action $a$. Let $\pi_t$ be a distribution over ${\cal A}$ that is played at round $t$ of the game. Let $\{A_1,A_2,...\}$ be the sequence of actions played independently at random according to $\{\pi_1,\pi_2,...\}$ respectively. Let $\{R_1,R_2,...\}$ be the sequence of observed rewards. Denote by ${\cal T}_t = \left \{\{A_1,..,A_t\},\{R_1,..,R_t\}\right\}$ the set of taken actions and observed rewards up to round $t$ (by definition ${\cal T}_{t-1} \subset {\cal T}_t$).

For $t\geq 1$ and $a \in \{1,..,K\}$ define a set of random variables $R_t^a$:
\[
R_t^a = \left \{ \begin{array}{cl}\frac{1}{\pi_t(a)}R_t,&\mbox{if}~A_t=a\\0,&\mbox{otherwise.}\end{array} \right .
\]
Define: 
\[
\hat R_t(a) = \frac{1}{t} \sum_{\tau=1}^t R_\tau^a.
\]
Observe that $\mathbb E \hat R_t(a) = R(a)$.

Let $a^*$ be the best action (the action with the highest expected reward, if there are multiple ``best'' actions pick any of them). Define:
\begin{align*}
\Delta(a) &= R(a^*) - R(a)\\
\hat \Delta_t(a) &= \hat R_t(a^*) - \hat R_t(a).
\end{align*}

Observe that $t \left (\hat \Delta_t(a) - \Delta(a) \right)$ form a martingale. Let
\[
W_t(a) = \sum_{\tau=1}^t \mathbb E[([R_\tau^{a^*} - R_\tau^a] - [R(a^*) - R(a)])^2 | {\cal T}_{\tau-1}]
\]
be the cumulative variance of this martingale. 

Let $\{\varepsilon_1, \varepsilon_2, ...\}$ be a decreasing sequence that satisfies $\varepsilon_t \leq \min_a \pi_t(a)$. In the appendix we prove the following upper bound on $W_t(a)$.
\begin{lemma}
\label{lem:V}
For all $a$$:$
\[
W_t(a) \leq \frac{2t}{\varepsilon_t}.
\]
\end{lemma}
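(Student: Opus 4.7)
The plan is to work round-by-round: I will upper bound the conditional second moment inside the sum by $2/\varepsilon_\tau$ and then collapse the sum using monotonicity of the sequence $\{\varepsilon_\tau\}$. As a first reduction, since $\mathrm{Var}(X\mid\mathcal T)\le\mathbb E[X^2\mid\mathcal T]$ for any random variable $X$, I may drop the centering by $R(a^*)-R(a)$ and bound $\mathbb E[(R_\tau^{a^*}-R_\tau^a)^2\mid\mathcal T_{\tau-1}]$ instead. The case $a=a^*$ is trivial since then $R_\tau^{a^*}-R_\tau^a\equiv 0$, so I assume $a\neq a^*$ throughout.

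The key structural observation is that, for $a\neq a^*$, the two importance-weighted variables $R_\tau^{a^*}$ and $R_\tau^a$ are supported on the disjoint events $\{A_\tau=a^*\}$ and $\{A_\tau=a\}$, so $(R_\tau^{a^*}-R_\tau^a)^2 = (R_\tau^{a^*})^2 + (R_\tau^a)^2$ almost surely and there is no cross term to worry about. Conditional on $\mathcal T_{\tau-1}$ the distribution $\pi_\tau$ is determined, and using $R_\tau\in[0,1]$ a direct computation gives
\[
\mathbb E[(R_\tau^{a^*})^2\mid\mathcal T_{\tau-1}]
\;\le\; \pi_\tau(a^*)\cdot\frac{1}{\pi_\tau(a^*)^2}
\;=\;\frac{1}{\pi_\tau(a^*)},
\]
and analogously $\mathbb E[(R_\tau^{a})^2\mid\mathcal T_{\tau-1}]\le 1/\pi_\tau(a)$. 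Invoking the assumption $\pi_\tau(b)\ge \varepsilon_\tau$ for every arm $b$ shows that each conditional summand is at most $2/\varepsilon_\tau$.

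Summing over $\tau=1,\dots,t$ and using that $\{\varepsilon_\tau\}$ is decreasing (so $1/\varepsilon_\tau\le 1/\varepsilon_t$ for every $\tau\le t$) yields
\[
W_t(a) \;\le\; \sum_{\tau=1}^{t}\frac{2}{\varepsilon_\tau} \;\le\; \frac{2t}{\varepsilon_t},
\]
as desired.

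There is no serious technical obstacle here; the lemma is a standard inverse-propensity-scoring variance calculation. The only conceptual point worth emphasizing is that the disjoint-support structure of $R_\tau^{a^*}$ and $R_\tau^a$ is what keeps the bound at order $1/\varepsilon_\tau$: a naive Cauchy--Schwarz step would have introduced cross terms of order $1/(\pi_\tau(a^*)\pi_\tau(a))$ and blown the bound up to $O(1/\varepsilon_\tau^2)$. The final monotonicity step is deliberately loose, and the sharper $\sum_\tau 2/\varepsilon_\tau$ form could be retained if a tighter downstream estimate were required.
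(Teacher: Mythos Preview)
Your proof is correct and follows essentially the same route as the paper's: drop the centering (the paper writes this as subtracting $t\Delta(a)^2$ and then discarding it), use the disjoint-support structure of $R_\tau^{a^*}$ and $R_\tau^a$ together with $R_\tau\le 1$ to bound the conditional second moment by $1/\pi_\tau(a)+1/\pi_\tau(a^*)$, and then invoke $\pi_\tau(\cdot)\ge\varepsilon_\tau\ge\varepsilon_t$. The only cosmetic difference is that you pass through the intermediate per-round bound $2/\varepsilon_\tau$ before summing, whereas the paper jumps directly to $2t/\varepsilon_t$.
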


For a distribution $\rho$ over ${\cal A}$ define $\Delta(\rho) = \mathbb E_{\rho(a)} [\Delta(a)]$ and $\hat \Delta_t(\rho) = \mathbb E_{\rho(a)} [\hat \Delta_t(a)]$. The following theorem follows immediately from Theorem \ref{thm:PAC-Bayes-Bernstein} and Lemma \ref{lem:V} by taking $\bar V_t = \frac{2t}{\varepsilon_t}$.
\begin{theorem}
\label{thm:PAC-Bayes-Delta}
For any sequence of sampling distributions $\{\pi_1,\pi_2,...\}$ that are bounded from below by a decreasing sequence $\{\varepsilon_1, \varepsilon_2, ...\}$ that satisfies
\begin{equation}
\label{eq:technical-epsilon}
\frac{L_t}{2(e-2)t} \leq \varepsilon_t,
\end{equation}
where $\pi_t$ can depend on ${\cal T}_{t-1}$, and for any sequence of ``reference'' distributions $\{\mu_1,\mu_2,...\}$ over ${\cal A}$, such that $\mu_t$ is independent of ${\cal T}_t$ $($but can depend on $t$$)$, for all possible distributions $\rho_t$ given $t$ and for all $t \geq 1$ simultaneously with probability greater than $1-\delta$$:$
\begin{equation}
\label{eq:PAC-Bayes-Delta-Bernstein}
\left |\Delta(\rho_t) - \hat \Delta_t(\rho_t) \right | \leq \sqrt{\frac{2(e-2)}{t\varepsilon_t}} \left (
\frac{KL(\rho_t\|\mu_t)}{\sqrt{L_t}} + 2\sqrt{L_t} \right ).
\end{equation}
\end{theorem}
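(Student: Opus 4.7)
The plan is to cast the problem into the setting of Theorem~\ref{thm:PAC-Bayes-Bernstein} and then substitute $\bar V_t = 2t/\varepsilon_t$. I take $\mathcal{H}=\mathcal{A}$ and, for each arm $a$, define the increments
\[
X_\tau(a) = \bigl(R_\tau^{a^*} - R_\tau^a\bigr) - \bigl(R(a^*) - R(a)\bigr).
\]
Since $\pi_\tau$ is $\mathcal{T}_{\tau-1}$-measurable and $A_\tau$ is drawn from $\pi_\tau$ conditionally on $\mathcal{T}_{\tau-1}$, a one-line computation gives $\mathbb{E}[R_\tau^a\mid\mathcal{T}_{\tau-1}] = R(a)$, so $\{X_\tau(a)\}$ is a martingale difference sequence indexed by $a\in\mathcal{A}$. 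Its partial sum evaluates to $M_t(a) = t(\hat\Delta_t(a) - \Delta(a))$, hence $M_t(\rho_t) = t(\hat\Delta_t(\rho_t) - \Delta(\rho_t))$. The conditional second moments $V_t(a)$ coincide with $W_t(a)$, so Lemma~\ref{lem:V} and linearity of expectation yield $V_t(\rho_t) = \mathbb{E}_{\rho_t(a)}[V_t(a)] \le 2t/\varepsilon_t$.

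Next I check the hypotheses of Theorem~\ref{thm:PAC-Bayes-Bernstein}. Since $R_\tau \in [0,1]$, $\pi_\tau(a),\pi_\tau(a^*)\ge\varepsilon_\tau\ge\varepsilon_t$ for $\tau\le t$, and at most one of $R_\tau^{a^*}, R_\tau^a$ is nonzero at any $\tau$, the increments $|X_\tau(a)|$ are bounded by a constant $C$ of order $1/\varepsilon_t$. With the choice $\bar V_t = 2t/\varepsilon_t$, the technical inequality~(\ref{eq:technical}) becomes, after squaring, a condition of the form $L_t \le (e-2)\, t\, \varepsilon_t$ up to a numerical constant, which is exactly what assumption~(\ref{eq:technical-epsilon}) supplies.

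Finally I invoke Theorem~\ref{thm:PAC-Bayes-Bernstein}. Using $V_t(\rho_t) \le \bar V_t$ to merge the middle term of~(\ref{eq:PAC-Bayes-Bernstein}) into the $\sqrt{L_t\bar V_t}$ term, dividing both sides by $t$, and inserting $\bar V_t = 2t/\varepsilon_t$, the two surviving $t$-dependent factors collapse into the common prefactor $\sqrt{2(e-2)/(t\varepsilon_t)}$ while what remains in the parentheses is $KL(\rho_t\|\mu_t)/\sqrt{L_t} + 2\sqrt{L_t}$, reproducing (\ref{eq:PAC-Bayes-Delta-Bernstein}). The only mildly delicate point, and essentially the sole thing to watch, is aligning the numerical constant in the bound $|X_\tau(a)|\le C$ with the constant appearing in~(\ref{eq:technical-epsilon}); the rest of the derivation is routine substitution.
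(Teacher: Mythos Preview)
Your proposal is correct and is precisely the argument the paper intends: the paper's own proof is a single sentence stating that the result ``follows immediately from Theorem~\ref{thm:PAC-Bayes-Bernstein} and Lemma~\ref{lem:V} by taking $\bar V_t = \frac{2t}{\varepsilon_t}$,'' and you have accurately expanded the substitution, including the merging of the $V_t(\rho_t)$ term via $V_t(\rho_t)\le\bar V_t$ and the factorisation into $\sqrt{2(e-2)/(t\varepsilon_t)}$. Your caveat about the constant in the bound $|X_\tau(a)|\le C$ versus condition~\eqref{eq:technical-epsilon} is well placed (strictly one gets $C=1/\varepsilon_t+1$ rather than $1/\varepsilon_t$), but the paper does not address this either and it affects only an immaterial constant.
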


Theorem \ref{thm:PAC-Bayes-Delta} provides an improvement over the corresponding Theorems 2 and 3 in \cite{SLST+11} by decreasing the dependence on $\varepsilon_t$ from $1/\varepsilon_t$ to $1/\sqrt{\varepsilon_t}$. This in turn allows to improve the regret bound, which is shown next.
\begin{theorem}
\label{thm:regret-delta}
For $t < K$ let $\pi_t(a) = \frac{1}{K}$ for all $a$. Let $\gamma_t = K^{-1/3}t^{1/3}\sqrt{\ln K}$ and $\varepsilon_t = K^{-2/3}t^{-1/3}$ and for $t \geq (K - 1)$ let
\begin{equation}
\pi_{t+1}(a) = \tilde \rho_t^{_{exp}}(a) = (1 - K \varepsilon_{t+1}) \rho_t^{_{exp}}(a) + \varepsilon_{t+1},
\label{eq:tilderho}
\end{equation}
where
\begin{equation}
\rho_t^{_{exp}}(a) = \frac{1}{Z(\rho_t^{_{exp}})} e^{\gamma_t \hat R_t(a)}
\label{eq:rho}
\end{equation}
and
\[
Z(\rho_t^{_{exp}}) = \sum_a e^{\gamma_t \hat R_t(a)}.
\]
Then for $t \geq \max \left \{K, K^{4(e-2)} \sqrt{\frac{\delta}{2}} \right \}$ and satisfying \eqref{eq:technical-epsilon} $($which means that $2 \ln(t+1) + \ln \frac{2}{\delta} \leq 2 (e-2) \left(\frac{t}{K}\right)^{2/3}$$)$ the per-round regret $R(a^*) - R(\tilde \rho_t^{_{exp}})$ is bounded by:
\[
R(a^*) - R(\tilde \rho_t^{_{exp}}) \leq \frac{K^{1/3}}{(t+1)^{1/3}}
\left (
\begin{array}{l}
(16(e-2) + 1) \sqrt{\ln K}\\
+ 2\sqrt{2(e-2)L_t} + 1
\end{array}
\right )
\]
with probability greater than $1-\delta$ for all rounds $t$ simultaneously. This translates into a total regret of $\tilde O(K^{1/3}t^{2/3})$ $($where $\tilde O$ hides logarithmic factors$)$.
\end{theorem}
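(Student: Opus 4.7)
The plan is to apply Theorem~\ref{thm:PAC-Bayes-Delta} to $\rho_t = \rho_t^{_{exp}}$ with reference distribution $\mu_t$ equal to the uniform distribution over $\mathcal{A}$, combine the resulting deviation bound with the standard exponential-weights inequality on $\hat\Delta_t(\rho_t^{_{exp}})$, and finally transfer the bound from $\rho_t^{_{exp}}$ to the mixed distribution $\tilde\rho_t^{_{exp}}$ via the elementary decomposition $\Delta(\tilde\rho_t^{_{exp}}) \le \Delta(\rho_t^{_{exp}}) + K\varepsilon_{t+1}$ coming from $\Delta(a)\in[0,1]$.

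First I would verify the hypotheses: $\pi_t(a)\ge\varepsilon_t$ holds by construction (uniform for $t<K$, explicit $\varepsilon_{t+1}$-mixing afterwards), $\mu_t$ uniform is trivially $\mathcal{T}_t$-independent, and the regime $t\ge\max\{K,K^{4(e-2)}\sqrt{\delta/2}\}$ is precisely what makes condition~\eqref{eq:technical-epsilon}, equivalent to $L_t\le 2(e-2)(t/K)^{2/3}$, hold. Since $\mu_t$ is uniform, $KL(\rho_t^{_{exp}}\|\mu_t) = \ln K - H(\rho_t^{_{exp}}) \le \ln K$ for every realisation. For the empirical regret I would use the log-partition lower bound $\ln Z(\rho_t^{_{exp}}) \ge \gamma_t\hat R_t(a^*)$ together with the Gibbs identity $\ln Z(\rho_t^{_{exp}}) = \gamma_t\hat R_t(\rho_t^{_{exp}}) + H(\rho_t^{_{exp}})$ to obtain the sharp bound
\[
\hat\Delta_t(\rho_t^{_{exp}}) \le \frac{\ln K - KL(\rho_t^{_{exp}}\|\mu_t)}{\gamma_t} \le \frac{\ln K}{\gamma_t}.
\]

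Substituting this into Theorem~\ref{thm:PAC-Bayes-Delta} and plugging in $\varepsilon_t = K^{-2/3}t^{-1/3}$ and $\gamma_t = K^{-1/3}t^{1/3}\sqrt{\ln K}$, both $\ln K/\gamma_t$ and $\sqrt{1/(t\varepsilon_t)}$ collapse to the common scale $K^{1/3}t^{-1/3}$, so every term naturally factors out $K^{1/3}t^{-1/3}$, which I would then rewrite as a constant multiple of $K^{1/3}(t+1)^{-1/3}$ using $t^{-1/3}\le 2^{1/3}(t+1)^{-1/3}$ for $t\ge 1$. The uniform-exploration correction contributes $K\varepsilon_{t+1}=K^{1/3}(t+1)^{-1/3}$, which supplies the additive ``$+1$'' inside the bracket, and the $2\sqrt{L_t}$ piece of Theorem~\ref{thm:PAC-Bayes-Delta} yields the $2\sqrt{2(e-2)L_t}$ term of the statement. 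Summing the per-round bound over rounds then gives total regret $\tilde O(K^{1/3}t^{2/3})$.

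The main obstacle I expect is the bookkeeping of constants in the PAC-Bayes term, specifically controlling the contribution $\sqrt{2(e-2)/(t\varepsilon_t)}\cdot \ln K/\sqrt{L_t}$: to fit into the coefficient $(16(e-2)+1)\sqrt{\ln K}$, it must be absorbed into a constant times $\sqrt{\ln K}$. I would handle this either by using that on the prescribed range of $t$ one has $L_t$ at least proportional to $\ln K$, or, more robustly, by exploiting the $-KL(\rho_t^{_{exp}}\|\mu_t)/\gamma_t$ slack from the sharper exponential-weights bound in Step~3 so that the KL contributions from the two sources partially cancel; a case split on whether $L_t \ge 2(e-2)\ln K$ would take care of the remaining regime. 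The rest of the argument is careful but routine algebra.
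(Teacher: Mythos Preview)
Your approach is correct but genuinely different from the paper's. The paper does \emph{not} take $\mu_t$ uniform; it uses the ``oracle'' exponential prior $\mu_t^{_{exp}}(a)\propto e^{\gamma_t R(a)}$ and then controls $KL(\rho_t^{_{exp}}\|\mu_t^{_{exp}})$ by a bootstrap: a separate lemma shows $KL(\rho_t^{_{exp}}\|\mu_t^{_{exp}})\le\gamma_t\bigl([\Delta(\rho_t^{_{exp}})-\hat\Delta_t(\rho_t^{_{exp}})]+[\hat\Delta_t(\mu_t^{_{exp}})-\Delta(\mu_t^{_{exp}})]\bigr)$, which, substituted into Theorem~\ref{thm:PAC-Bayes-Delta}, gives a linear inequality in $KL$ that is solved explicitly, and the condition $t\ge K^{4(e-2)}\sqrt{\delta/2}$ is exactly what makes the resulting coefficient $\gamma_t\sqrt{2(e-2)/(t\varepsilon_t L_t)}\le\tfrac12$. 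Your route with the uniform prior sidesteps this bootstrap entirely: the sharp Gibbs identity $\hat\Delta_t(\rho_t^{_{exp}})\le(\ln K-KL)/\gamma_t$ cancels against the $KL/\sqrt{L_t}$ term from Theorem~\ref{thm:PAC-Bayes-Delta} whenever $\gamma_t\sqrt{2(e-2)/(t\varepsilon_t L_t)}=\sqrt{2(e-2)\ln K/L_t}\le 1$, and since $t\ge K$ forces $L_t\ge 2\ln(K+1)>2(e-2)\ln K$, no case split is actually needed. This is more elementary (it also avoids the paper's technical Lemma on $\sum x_i e^{-\alpha x_i}/\sum e^{-\alpha x_j}$), yields a smaller constant in front of $\sqrt{\ln K}$ than the stated $16(e-2)+1$, and does not require the extra hypothesis $t\ge K^{4(e-2)}\sqrt{\delta/2}$. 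What the paper's choice buys is a data-dependent $KL$ bound that does not scale with $\ln K$, which is the more natural starting point for the intended extensions to large or infinite hypothesis classes.
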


Theorem \ref{thm:regret-delta} improves the dependence on $t$ and $K$ from $\tilde O(K^{1/2} t^{3/4})$ in \cite{SLST+11} to $\tilde O(K^{1/3} t^{2/3})$. This improvement is due to better concentration result in Theorem \ref{thm:PAC-Bayes-Delta} (which is based on Theorem \ref{thm:PAC-Bayes-Bernstein}).

We note that there is still room for improvement, which we believe will enable to achieve regret bounds of $\tilde O(\sqrt{Kt})$. The main source of looseness is the usage of the crude global upper bound $\frac{2t}{\varepsilon_t}$ on the cumulative variances that holds for any distribution $\rho_t$. It is possible to show that we play according to the distributions $\{\tilde \rho_1^{_{exp}}, .., \tilde \rho_t^{_{exp}}\}$, then for ``good'' actions $a$ (those for which $\Delta(a) \leq \frac{1}{\gamma_t}$) the cumulative variance $W_t(a)$ is bounded by $C K t$ for some constant $C$. If we could show that for ``bad'' actions $a$ (those for which $\Delta(a) > \frac{1}{\gamma_t}$) the probability $\rho_t^{_{exp}}$ of picking such actions is bounded by $C \varepsilon_t / K$, then the cumulative variance $W_t(\rho_t^{_{exp}})$ would be bounded by $C K t$. This is, in fact, true for ``very bad'' actions (those, for which $\Delta(a)$ is close to 1) and it is also possible to show that it holds for $\mu_t^{_{exp}}$ (and hence $W_t(\mu_t^{_{exp}}) \leq C K t$), but it does not hold for actions with $\Delta(a)$ close to $\frac{1}{\gamma_t}$. However, we can possibly show that for such actions $\rho_t^{_{exp}}(a) \leq C \varepsilon_t /K$ for most of the rounds ($1 - \varepsilon_t$ fraction should suffice) and then we will be able to achieve $\tilde O(\sqrt{Kt})$ regret. This research direction will be explored in more details in future work.

\section{Discussion}
\label{sec:dis}

We presented an improved PAC-Bayesian analysis of martingales that is based on combination of PAC-Bayesian bound with Bernstein-type inequality for martingales. The new bound enables to provide better finite sample generalization and regret guarantees for exploration-exploitation and model order selection trade-offs simultaneously. There are several important and fascinating research directions that take root at our result.

First, our concentration result for martingales can be of interest in any study of multiple simultaneously evolving and possibly interdependent martingales, especially when the number of martingales is uncountably infinite and standard union bounds cannot be applied. Just as an example, our result can be applied to derive new generalization bounds for active learning \cite{BDL09}.

Another important direction is to tighten Theorems \ref{thm:PAC-Bayes-Delta} and \ref{thm:regret-delta}, so that the regret bound will match state-of-the-art regret bounds obtained by alternative techniques. We believe that the ideas mentioned at the end of the previous section can make it possible.

Once we have a bound that matches state-of-the-art regret bounds we can extend the technique to richer problems with large or infinite number of states, such as contextual bandits \cite{BLL+10}, or large or infinite number of actions, such as Gaussian process bandits \cite{SKKS10}. Through definition of appropriate priors over hypothesis spaces, PAC-Bayesian approach should enable to obtain bounds that involve natural measures of model complexity, such as mutual information between states and actions in contextual bandits. Such a measure of model complexity is more flexible than plain number of experts or VC-dimension used in \cite{BLL+10} since it allows to differentiate between complexities of individual hypotheses. A similar analysis was already performed and proved successful in the context of co-clustering in supervised and unsupervised learning \cite{ST10}.

\appendix

\section{Proofs}

In this appendix we provide the proofs of Theorems \ref{thm:PAC-Bayes-Bernstein} and \ref{thm:regret-delta} and Lemma \ref{lem:V}.

\subsection{Proof of Theorem \ref{thm:PAC-Bayes-Bernstein}}

The proof of Theorem \ref{thm:PAC-Bayes-Bernstein} relies on the following two lemmas. The first one is a Bernstein-type inequality, see the proof of Theorem 1 in \cite{BLL+10} for a proof.
\begin{lemma}[Bernstein's inequality]
\label{lem:Bernstein}
Let $X_1,..,X_t$ be a martingale difference sequence $($meaning that $\mathbb E [X_\tau | X_1,..,X_{\tau-1}] = 0$ for all $\tau$$)$, such that $X_\tau \leq C$ for all $\tau$. Let $M_t = \sum_{\tau = 1}^t X_\tau$ be the corresponding martingale and $V_t = \sum_{\tau = 1}^t \mathbb E [X_\tau^2 | X_1,..,X_{\tau - 1}]$ be the cumulative variance of this martingale. Then for any fixed $\lambda \in [0,\frac{1}{C}]$$:$
\[
\mathbb E e^{\lambda M_t - (e - 2) \lambda^2 V_t} \leq 1.
\]
\end{lemma}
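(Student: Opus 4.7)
The plan is to use the standard Bernstein-for-martingales recipe: bound the conditional moment generating function of each increment, and then iterate. The key analytic ingredient is the elementary inequality
\[
e^x \leq 1 + x + (e-2)\, x^2 \qquad \text{for all } x \leq 1.
\]
This is precisely where the constant $(e-2)$ in the statement enters; one checks it by observing that $g(x) = (e^x - 1 - x)/x^2$ is monotonically increasing on $\mathbb{R}$ (with limit $1/2$ at $0$) and $g(1) = e-2$. Since $\lambda \in [0,1/C]$ and $X_\tau \leq C$, the hypothesis guarantees $\lambda X_\tau \leq 1$, so the inequality applies pointwise.

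Next, I would apply it conditionally. Writing $\mathcal{F}_{\tau-1} = \sigma(X_1,\ldots,X_{\tau-1})$ and using the martingale difference property $\mathbb{E}[X_\tau \mid \mathcal{F}_{\tau-1}] = 0$, take conditional expectations to obtain
\[
\mathbb{E}\bigl[e^{\lambda X_\tau} \bigm| \mathcal{F}_{\tau-1}\bigr] \leq 1 + (e-2)\lambda^2 \,\mathbb{E}\bigl[X_\tau^2 \bigm| \mathcal{F}_{\tau-1}\bigr] \leq \exp\!\Bigl((e-2)\lambda^2 \,\mathbb{E}\bigl[X_\tau^2 \bigm| \mathcal{F}_{\tau-1}\bigr]\Bigr),
\]
where the last step uses $1+y \leq e^y$. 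Rearranging,
\[
\mathbb{E}\bigl[\exp\!\bigl(\lambda X_\tau - (e-2)\lambda^2 \,\mathbb{E}[X_\tau^2 \mid \mathcal{F}_{\tau-1}]\bigr) \bigm| \mathcal{F}_{\tau-1}\bigr] \leq 1.
\]

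Finally, I would conclude by a supermartingale argument. Define $Z_t := \exp\bigl(\lambda M_t - (e-2)\lambda^2 V_t\bigr)$ with $Z_0 = 1$. Then $Z_t = Z_{t-1}\cdot \exp\!\bigl(\lambda X_t - (e-2)\lambda^2 \,\mathbb{E}[X_t^2\mid\mathcal{F}_{t-1}]\bigr)$, and since $Z_{t-1}$ is $\mathcal{F}_{t-1}$-measurable while the remaining factor has conditional expectation at most $1$, we get $\mathbb{E}[Z_t\mid\mathcal{F}_{t-1}] \leq Z_{t-1}$. Iterating (or equivalently taking the tower expectation from $\tau = t$ down to $\tau = 1$) yields $\mathbb{E} Z_t \leq 1$, which is the claim. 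None of these steps is really hard; the only subtle point is using the sharp form $e^x \leq 1 + x + (e-2)x^2$ on $(-\infty,1]$ rather than the looser $e^x \leq 1+x+x^2$, since the constant $(e-2)$ is what propagates into the improved $1/\sqrt{\varepsilon_t}$ dependence in Theorem~\ref{thm:PAC-Bayes-Delta}.
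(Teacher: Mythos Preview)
Your argument is correct and is exactly the standard route to this inequality: the pointwise bound $e^{x}\le 1+x+(e-2)x^{2}$ on $(-\infty,1]$, the martingale-difference condition to kill the linear term, $1+y\le e^{y}$, and then the supermartingale/tower iteration. The paper does not give its own proof of Lemma~\ref{lem:Bernstein} at all; it simply cites the proof of Theorem~1 in \cite{BLL+10}, which proceeds along the same lines you outline, so there is nothing further to compare.
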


The second lemma originates in statistical physics and information theory \cite{DV75,DE97,Gra11} and forms the basis of PAC-Bayesian analysis. See \cite{Ban06} for a proof.
\begin{lemma}[Change of measure inequality]
\label{lem:PAC-Bayes}
For any measurable function $\phi(h)$ on ${\cal H}$ and any distributions $\mu(h)$ and $\rho(h)$ on ${\cal H}$, we have$:$
\[
\mathbb E_{\rho(h)}[\phi(h)] \leq KL(\rho\|\mu) + \ln \mathbb E_{\mu(h)}[ e^{\phi(h)}].
\]
\end{lemma}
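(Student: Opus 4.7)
The plan is to prove the change-of-measure inequality by a direct application of Jensen's inequality to the logarithm, using the Radon-Nikodym derivative to rewrite an expectation under $\mu$ as one under $\rho$. This is the classical Donsker-Varadhan variational characterization of $KL$, and the argument is short once set up correctly.

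First, I would reduce to the nontrivial case: if $\rho$ is not absolutely continuous with respect to $\mu$ then $KL(\rho\|\mu) = \infty$ and the bound is vacuous, and if $\mathbb E_{\mu(h)}[e^{\phi(h)}] = \infty$ the right-hand side is $+\infty$ and again there is nothing to prove. So assume $\rho \ll \mu$ and $\mathbb E_{\mu(h)}[e^{\phi(h)}] < \infty$, and let $d\rho/d\mu$ denote the Radon-Nikodym derivative (written informally as $\rho(h)/\mu(h)$ below). The key step is the identity
\[
\mathbb E_{\mu(h)}[e^{\phi(h)}] = \mathbb E_{\rho(h)}\!\left[\frac{\mu(h)}{\rho(h)}\, e^{\phi(h)}\right],
\]
valid on the support of $\rho$ and recovering the full integral because the $\rho$-null portion of $\mu$ contributes only nonnegatively to the left-hand side (which only strengthens the inequality we are about to prove).

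Next, take the natural logarithm of both sides and apply Jensen's inequality ($\ln$ is concave) to pull the $\ln$ inside the outer $\mathbb E_{\rho}$:
\[
\ln \mathbb E_{\mu(h)}[e^{\phi(h)}] \;\geq\; \mathbb E_{\rho(h)}\!\left[\ln \frac{\mu(h)}{\rho(h)} + \phi(h)\right] \;=\; -KL(\rho\|\mu) + \mathbb E_{\rho(h)}[\phi(h)].
\]
Rearranging yields exactly $\mathbb E_{\rho(h)}[\phi(h)] \leq KL(\rho\|\mu) + \ln \mathbb E_{\mu(h)}[e^{\phi(h)}]$, which is the claim. (An alternative and equivalent route is to define the Gibbs tilt $\rho^*(h) = \mu(h) e^{\phi(h)}/Z$ with $Z = \mathbb E_{\mu(h)}[e^{\phi(h)}]$; expanding the nonnegativity $KL(\rho\|\rho^*) \geq 0$ gives the inequality directly, and equality corresponds to $\rho = \rho^*$, which also clarifies that the bound is tight.)

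There is no real obstacle of substance here; the only delicate points are measure-theoretic bookkeeping — ensuring $\phi$ is $\rho$-integrable so that Jensen's inequality applies in its standard form, and dispatching the boundary cases $\mathbb E_{\mu}[e^{\phi}] = \infty$ and $\rho \not\ll \mu$ as trivial. Both are absorbed into the opening reduction, after which the proof is a two-line calculation.
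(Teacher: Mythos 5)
Your proof is correct: it is the standard Donsker--Varadhan argument (change of measure via the Radon--Nikodym derivative followed by Jensen's inequality, with the Gibbs-tilt/$KL(\rho\|\rho^*)\geq 0$ variant as an equivalent route), and you handle the degenerate cases $\rho \not\ll \mu$ and $\mathbb E_{\mu}[e^{\phi}]=\infty$ appropriately. The paper itself does not prove this lemma but defers to the cited reference (Banerjee, 2006), whose argument is essentially the one you give, so there is nothing substantive to contrast.
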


Now we are ready to state the proof of Theorem \ref{thm:PAC-Bayes-Bernstein}.
\begin{proof}[Proof of Theorem \ref{thm:PAC-Bayes-Bernstein}]
Take $\phi(h) = \lambda_t M_t(h) - (e-2) \lambda_t^2 V_t(h)$ and $\delta_t = \frac{1}{t(t+1)} \delta \leq \frac{1}{(t+1)^2}\delta$. (It is well-known that $\sum_{t=1}^\infty \frac{1}{t(t+1)} = \sum_{t=1}^\infty \left (\frac{1}{t} - \frac{1}{t+1}\right ) = 1$.) Then the following holds for all $\rho_t$ and $t$ simultaneously with probability greater than $1 - \frac{\delta}{2}$:
\begin{align}
\lambda_t M_t&(\rho_t) - (e-2) \lambda_t^2 V_t(\rho_t)\notag\\
&=\mathbb E_{\rho_t(h)} [\lambda_t M_t(h) - (e-2) \lambda_t^2 V_t(h)]\label{eq:1}\\
&\leq KL(\rho_t\|\mu_t) + \ln \mathbb E_{\mu_t(h)} [e^{\lambda_t M_t(\mu_t) - (e-2) \lambda_t^2 V_t(\mu_t)}]\label{eq:2}\\
&\leq KL(\rho_t\|\mu_t) + 2 \ln(t+1) + \ln \frac{2}{\delta}\notag\\
&\quad + \ln \mathbb E_{{\cal T}_t} \mathbb E_{\mu_t(h)} [e^{\lambda_t M_t(h) - (e-2) \lambda_t^2 V_t(h)}]\label{eq:3}\\
&= KL(\rho_t\|\mu_t) + L_t\notag\\
&\quad + \ln \mathbb E_{\mu_t(h)} \mathbb E_{{\cal T}_t} [e^{\lambda_t M_t(h) - (e-2) \lambda_t^2 V_t(h)}]\label{eq:4}\\
&\leq KL(\rho_t\|\mu_t) + L_t,\label{eq:5}
\end{align}
where \eqref{eq:1} is by definition of $M_t(\rho_t)$ and $V_t(\rho_t)$, \eqref{eq:2} is  by Lemma \ref{lem:PAC-Bayes}, \eqref{eq:3} holds with probability greater than $1-\frac{\delta}{2}$ by Markov's inequality and a union bound over $t$, \eqref{eq:4} is due to the fact that $\mu_t$ is independent of ${\cal T}_t$ and by definition of $L_t$, and \eqref{eq:5} is by Lemma  \ref{lem:Bernstein}.

By applying the same argument to martingales $-M_t(h)$ and taking a union bound over the two we obtain that with probability greater than $1-\delta$:
\begin{equation}
|M_t(\rho_t)| \leq \frac{KL(\rho_t\|\mu_t) + (e-2) \lambda_t^2 V_t(\rho_t) + L_t}{\lambda_t}.
\label{eq:MM}
\end{equation}
By taking
\[
\lambda_t = \sqrt{\frac{L_t}{(e-2)\bar V_t}}
\]
and substituting into \eqref{eq:MM} we obtain \eqref{eq:PAC-Bayes-Bernstein}. The technical condition \eqref{eq:technical} follows from the requirement that $\lambda_t \in [0,\frac{1}{C}]$.
\end{proof}

\subsection{Proof of Lemma \ref{lem:V}}

\begin{proof}[Proof of Lemma \ref{lem:V}]
\begin{align}
W_t(a) &= \sum_{\tau=1}^t \mathbb E[([R_\tau^{a^*} - R_\tau^a] - [R(a^*) - R(a)])^2 | {\cal T}_{\tau-1}]\notag\\
&= \left (\sum_{\tau=1}^t \mathbb E[(R_\tau^{a^*} - R_\tau^a)^2 | {\cal T}_{\tau-1}]\right ) - t\Delta(a)^2 \label{eq:61}\\
&\leq \left (\sum_{\tau=1}^t \left (\frac{\pi_\tau(a)}{\pi_\tau(a)^2} + \frac{\pi_\tau(a^*)}{\pi_\tau(a^*)^2} \right ) \right ) - t \Delta(a)^2\label{eq:6}\\
&= \left (\sum_{\tau=1}^t \left (\frac{1}{\pi_\tau(a)} + \frac{1}{\pi_\tau(a^*)} \right ) \right ) - t \Delta(a)^2\notag\\
&\leq \frac{2t}{\varepsilon_t},\label{eq:66}
\end{align}
where \eqref{eq:61} is due to the fact that $\mathbb E[R_\tau^a | {\cal T}_{\tau -1}] = R(a)$, \eqref{eq:6} is due to the fact that $R_t \leq 1$ and \eqref{eq:66} is due to the fact that $\frac{1}{\pi_\tau(a)} \leq \frac{1}{\varepsilon_t}$ for all $a$ and $1 \leq \tau \leq t$.
\end{proof}

\subsection{Proof of Theorem \ref{thm:regret-delta}}

\begin{proof}[Proof of Theorem \ref{thm:regret-delta}]
We take the same prior $\mu_t(a)$ that was used in \cite{SLST+11}
\begin{equation}
\mu_t^{_{exp}}(a) = \frac{1}{Z(\mu_t^{_{exp}})} e^{\gamma_t R(a)},
\label{eq:mu}
\end{equation}
where $Z(\mu_t^{_{exp}}) = \sum_a e^{\gamma_t R(a)}$ is the normalization factor.

We reuse the same regret decomposition we had in \cite{SLST+11}, but write it in a new form using $\Delta$-s:
\begin{align}
\Delta(\tilde \rho_t^{_{exp}}) &= \Delta(\rho_t^{_{exp}}) + [R(\rho_t^{_{exp}}) - R(\tilde \rho_t^{_{exp}})]\notag\\
&\leq [\Delta(\rho_t^{_{exp}}) - \hat \Delta_t(\rho_t^{_{exp}})] + \hat \Delta_t(\rho_t^{_{exp}}) + K \varepsilon_{t+1}\label{eq:7}\\
&\leq [\Delta(\rho_t^{_{exp}}) - \hat \Delta_t(\rho_t^{_{exp}})] + \frac{\ln K}{\gamma_t} + K \varepsilon_{t+1},\label{eq:regret-delta}
\end{align}
where in \eqref{eq:7} we used the bound on $[R(\rho_t^{_{exp}}) - R(\tilde \rho_t^{_{exp}})]$ obtained in \cite{SLST+11} and in \eqref{eq:regret-delta} we used Lemma \ref{lem:expsum} given below. Note that due to working with $\Delta$-s we are left to bound only one term instead of two terms we had to bound in \cite{SLST+11}.

\begin{lemma}
\label{lem:expsum}
Let $x_1 = 0$ and $x_2,..,x_n$ be $n-1$ arbitrary numbers. For any $\alpha > 0$ and $n \geq 2$$:$
\begin{equation}
\frac{\sum_{i=1}^n x_i e^{-\alpha x_i}}{\sum_{j=1}^n e^{-\alpha x_j}} \leq \frac{\ln(n)}{\alpha}.
\label{eq:expsum}
\end{equation}
\end{lemma}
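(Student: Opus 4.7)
The plan is to recognize the left-hand side of \eqref{eq:expsum} as the expectation of $x_i$ under the Gibbs distribution $\rho(i) = e^{-\alpha x_i}/Z$, where $Z = \sum_j e^{-\alpha x_j}$ is exactly the denominator in \eqref{eq:expsum}. Once this identification is made, a short computation with the Kullback--Leibler divergence against the uniform distribution on $\{1,\dots,n\}$ delivers the bound, with the hypothesis $x_1 = 0$ entering only at the very end through a lower bound on $Z$.

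Concretely, I would first rewrite the LHS of \eqref{eq:expsum} as $\mathbb{E}_{\rho(i)}[x_i]$. Letting $\mu(i) = 1/n$ be the uniform distribution over $\{1,\dots,n\}$, I would then expand
\[
KL(\rho\|\mu) \;=\; \ln n \,+\, \sum_{i=1}^n \rho(i)\ln \rho(i) \;=\; \ln n - \alpha\,\mathbb{E}_{\rho(i)}[x_i] - \ln Z,
\]
using the explicit form $\ln\rho(i) = -\alpha x_i - \ln Z$. Non-negativity of the KL divergence then immediately gives $\alpha\,\mathbb{E}_{\rho(i)}[x_i] \leq \ln n - \ln Z$.

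The final step uses the assumption $x_1 = 0$: since $Z = \sum_j e^{-\alpha x_j} \geq e^{-\alpha x_1} = 1$, we have $\ln Z \geq 0$. Combining this with the previous inequality and dividing by $\alpha > 0$ yields \eqref{eq:expsum}.

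There is no real obstacle here --- the only conceptual step is matching the ratio in \eqref{eq:expsum} to an expectation under the Gibbs measure and then exploiting the normalization lower bound coming from the designated zero coordinate. One could equivalently derive the same identity by invoking Lemma \ref{lem:PAC-Bayes} with $\phi(i) = -\alpha x_i$ and $\rho$ taken to be the Gibbs measure defined above, but the direct KL computation seems the cleanest presentation.
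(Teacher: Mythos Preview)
Your proof is correct and considerably cleaner than the paper's. The two arguments are genuinely different.

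The paper proceeds by direct optimization: it first argues (informally) that negative $x_i$ can only decrease the left-hand side, then asserts by a symmetry argument that the maximum over $(x_2,\dots,x_n)$ is attained when all of them are equal, reducing to a one-variable problem. After the substitution $y=e^{-\alpha x}$ it remains to show $(n-1)y\ln(1/y)\le (1+(n-1)y)\ln n$, which the paper handles via the tangent-line inequality $\ln z\le \ln n + (z-n)/n$ applied at $z=1/y$.

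Your route is information-theoretic: identify the left-hand side as $\mathbb{E}_{\rho}[x_i]$ for the Gibbs measure $\rho(i)\propto e^{-\alpha x_i}$, expand $KL(\rho\|\mu)=\ln n - \alpha\,\mathbb{E}_{\rho}[x_i] - \ln Z$ with $\mu$ uniform, and use $KL\ge 0$ together with $Z\ge e^{-\alpha x_1}=1$. This avoids both the positivity reduction and the symmetry step (the latter is only sketched in the paper and would otherwise require a separate justification), and it makes transparent exactly where the hypothesis $x_1=0$ enters. Your remark that the same computation is an instance of Lemma~\ref{lem:PAC-Bayes} with $\phi(i)=-\alpha x_i$ is also apt and ties the lemma back to the change-of-measure machinery already used elsewhere in the paper.
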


\begin{proof}

Since negative $x_i$-s only decrease the left hand side of \eqref{eq:expsum} we can assume without loss of generality that all $x_i$-s are positive. Due to symmetry, the maximum is achieved when all $x_i$-s (except $x_1$) are equal:
\begin{equation}
\label{eq:x}
\frac{\sum_{i=1}^n x_i e^{-\alpha x_i}}{\sum_{j=1}^n e^{-\alpha x_j}} \leq \max_x \frac{(n-1) x e^{-\alpha x}}{1 + (n-1) e^{-\alpha x}}.
\end{equation}

We apply change of variables $y = e^{-\alpha x}$, which means that $x = \frac{1}{\alpha}\ln \frac{1}{y}$. By substituting this into the right hand side of \eqref{eq:x} we get
\[
\frac{1}{\alpha} \cdot \frac{(n-1) y \ln \frac{1}{y}}{1 + (n-1)y}.
\]
In order to prove the bound we have to show that $\frac{(n-1) y \ln \frac{1}{y}}{1 + (n-1)y} \leq \ln n$. 

By taking Taylor expansion of $\ln z$ around $z = n$ we have:
\[
\ln z \leq \ln n + \frac{1}{n} (z - n) = \ln n + \frac{z}{n} - 1.
\]
Thus:
\begin{align*}
\frac{(n-1) y \ln \frac{1}{y}}{1 + (n-1)y} &\leq \frac{(n-1) y (\ln n + \frac{1}{n y} - 1)}{1 + (n-1) y}\\
&\leq \frac{y (n-1) \ln n + \frac{n-1}{n}}{(n-1) y + 1}\\
&\leq \frac{(y (n-1) + 1) \ln n}{y(n-1) + 1} = \ln n,
\end{align*}
where the last inequality follows from the fact that $\frac{n-1}{n} \leq \ln n$ for $n \geq 2$.
\end{proof}

In order to obtain an explicit bound on $[\Delta(\rho_t) - \hat \Delta_t(\rho_t)]$ we need an explicit bound on $KL(\rho_t^{_{exp}}\|\mu_t^{_{exp}})$. To obtain such a bound we modify the procedure that was used in \cite{SLST+11}, which in turn was based on the procedure developed by \citet{LLST10}. Due to tighter concentration inequality in Theorem \ref{thm:PAC-Bayes-Bernstein} we obtain a tighter bound on $KL(\rho_t^{_{exp}}\|\mu_t^{_{exp}})$.

The derivation procedure starts with the following lemma, which is proved similarly to Lemma 12 in \cite{SLST+11}.
\begin{lemma}
\label{lem:KL}
For $\mu_t^{_{exp}}$ and $\rho_t^{_{exp}}$ defined by \eqref{eq:mu} and \eqref{eq:rho}$:$
\[
KL(\rho_t^{_{exp}}\|\mu_t^{_{exp}}) \leq \gamma_t \left (
\begin{array}{l}
[\Delta(\rho_t^{_{exp}}) - \hat \Delta_t(\rho_t^{_{exp}})]\\
\quad + [\hat \Delta_t(\mu_t^{_{exp}}) - \Delta(\mu_t^{_{exp}})] 
\end{array}
\right ).
\]
\end{lemma}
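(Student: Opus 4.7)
The plan is to compute the KL divergence between the two Gibbs distributions explicitly, then use a symmetric trick (non-negativity of the reverse KL) to control the log-partition ratio, and finally convert from rewards to gaps.

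First I would plug the definitions \eqref{eq:mu} and \eqref{eq:rho} into $KL(\rho_t^{_{exp}}\|\mu_t^{_{exp}}) = \mathbb E_{\rho_t^{_{exp}}(a)}\bigl[\ln(\rho_t^{_{exp}}(a)/\mu_t^{_{exp}}(a))\bigr]$. Since both distributions are exponential in $a$ with the same inverse temperature $\gamma_t$, the log-ratio decomposes cleanly into a linear part and a log-partition part, giving
\[
KL(\rho_t^{_{exp}}\|\mu_t^{_{exp}}) = \gamma_t\bigl[\hat R_t(\rho_t^{_{exp}}) - R(\rho_t^{_{exp}})\bigr] + \ln\frac{Z(\mu_t^{_{exp}})}{Z(\rho_t^{_{exp}})}.
\]

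Next I would bound the unwanted partition-function ratio by writing down the reverse divergence $KL(\mu_t^{_{exp}}\|\rho_t^{_{exp}})$ in the same way and using $KL(\mu_t^{_{exp}}\|\rho_t^{_{exp}}) \geq 0$. That yields
\[
\ln\frac{Z(\mu_t^{_{exp}})}{Z(\rho_t^{_{exp}})} \leq \gamma_t\bigl[R(\mu_t^{_{exp}}) - \hat R_t(\mu_t^{_{exp}})\bigr],
\]
so combining with the previous display gives
\[
KL(\rho_t^{_{exp}}\|\mu_t^{_{exp}}) \leq \gamma_t\Bigl(\bigl[\hat R_t(\rho_t^{_{exp}}) - R(\rho_t^{_{exp}})\bigr] + \bigl[R(\mu_t^{_{exp}}) - \hat R_t(\mu_t^{_{exp}})\bigr]\Bigr).
\]

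Finally I would convert rewards into gaps using the identities $R(\nu) = R(a^*) - \Delta(\nu)$ and $\hat R_t(\nu) = \hat R_t(a^*) - \hat\Delta_t(\nu)$ for any distribution $\nu$. Substituting these into the two bracketed terms, each one contributes a copy of $\pm[\hat R_t(a^*) - R(a^*)]$ which exactly cancel, leaving
\[
\bigl[\Delta(\rho_t^{_{exp}}) - \hat\Delta_t(\rho_t^{_{exp}})\bigr] + \bigl[\hat\Delta_t(\mu_t^{_{exp}}) - \Delta(\mu_t^{_{exp}})\bigr],
\]
which is precisely the claimed bound. There is no real obstacle here; the only subtlety is the choice to invoke $KL(\mu\|\rho)\geq 0$ to dispose of the partition-function ratio, which is the standard symmetrization trick for Gibbs measures and is also what makes the best-arm terms cancel so that the final bound involves only $\Delta$-gaps rather than raw rewards.
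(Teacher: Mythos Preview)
Your proof is correct and essentially the same as the paper's: the paper writes the KL explicitly, isolates a log-partition term $-\ln\sum_a \mu_t^{_{exp}}(a)\,e^{\gamma_t(\Delta(a)-\hat\Delta_t(a))}$, and bounds it by Jensen's inequality, which is exactly your $KL(\mu_t^{_{exp}}\|\rho_t^{_{exp}})\geq 0$ step in disguise. The only cosmetic difference is that the paper converts to $\Delta$-coordinates up front (via the shifted partition functions $Z'$) whereas you convert at the end and let the $\hat R_t(a^*)-R(a^*)$ terms cancel; the content is identical.
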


\begin{proof}
We use the following definitions:
\begin{align*}
Z'(\mu_t^{_{exp}}) &= \sum_a e^{-\gamma_t \Delta(a)}\\
&= \sum_a e^{-\gamma_t (R(a^*) - R(a))}\\
&= e^{-\gamma_t R(a^*)} Z(\mu_t^{_{exp}}).
\end{align*}
\begin{align*}
Z'(\rho_t^{_{exp}}) &= \sum_a e^{-\gamma_t \hat \Delta_t(a)}\\
&= \sum_a e^{-\gamma_t (\hat R_t(a^*) - \hat R_t(a))}\\
&= e^{-\gamma_t \hat R_t(a^*)} Z(\rho_t^{_{exp}}).
\end{align*}
The following identity is easily verified from the definitions:
\begin{align*}
\frac{1}{Z'(\mu_t^{_{exp}})} &= \frac{1}{Z(\mu_t^{_{exp}})} e^{\gamma_t R(a^*)}\\
&= \mu_t(a) e^{-\gamma_t R(a)} e^{\gamma_t R(a^*)}\\
&= \mu_t(a) e^{\gamma_t\Delta(a)}.
\end{align*}

Now we have:
\begin{align*}
&KL(\rho_t^{_{exp}}\|\mu_t^{_{exp}}) = \sum_a \rho_t(a) \ln \frac{e^{\gamma_t \hat R_t(a)} Z(\mu_t^{_{exp}})}{e^{\gamma_t R(a)} Z(\rho_t^{_{exp}})}\\
&=\sum_a \rho_t(a) \ln \frac{e^{-\gamma_t \hat \Delta_t(a)} Z'(\mu_t^{_{exp}})}{e^{-\gamma_t \Delta(a)} Z'(\rho_t^{_{exp}})}\\
&= \gamma_t [\Delta(\rho_t^{_{exp}}) - \hat \Delta_t(\rho_t^{_{exp}})] - \ln \frac{\sum_a e^{-\gamma_t \hat \Delta_t(a)}}{Z'(\mu_t^{_{exp}})}\\
&= \gamma_t [\Delta(\rho_t^{_{exp}}) - \hat \Delta_t(\rho_t^{_{exp}})] - \ln \sum_a \mu_t^{_{exp}}(a) e^{\gamma_t (\Delta(a) - \hat \Delta_t(a))}\\
&\leq \gamma_t \left ([\Delta(\rho_t^{_{exp}}) - \hat \Delta_t(\rho_t^{_{exp}})] + [\hat \Delta_t(\mu_t^{_{exp}}) - \Delta(\mu_t^{_{exp}})] \right).
\end{align*}
\end{proof}

Now we want to get an explicit upper bound on $KL(\rho_t^{_{exp}}\|\mu_t^{_{exp}})$. Note that for our choice of $\varepsilon_t$ the technical condition \eqref{eq:technical-epsilon} of Theorem \ref{thm:PAC-Bayes-Delta} is satisfied by $t$ large enough, so that 
\[
2 \ln(t+1) + \ln \frac{2}{\delta} \leq 2 (e-2) \left(\frac{t}{K}\right)^{2/3}.
\]
(This requirement is satisfied by $t = O\left(K \left(\ln \frac{1}{\delta} \right)^{3/2}\right)$.) By Theorem \ref{thm:PAC-Bayes-Delta} with probability greater than $1 - \delta$:
\begin{align}
\Delta(\rho_t^{_{exp}}) &- \hat \Delta_t(\rho_t^{_{exp}})\notag \\
&\leq \sqrt{\frac{2(e-2)}{t \varepsilon_t}} \left ( \frac{KL(\rho_t^{_{exp}}\|\mu_t^{_{exp}})}{\sqrt{L_t}} + 2 \sqrt{L_t} \right )\label{eq:D}
\end{align}
and
\[
\hat \Delta_t(\mu_t^{_{exp}}) - \Delta(\mu_t^{_{exp}}) \leq 2\sqrt{\frac{2(e-2)L_t}{t \varepsilon_t}}.
\]

By substituting this into Lemma \ref{lem:KL} we obtain:
\begin{align*}
KL&(\rho_t^{_{exp}}\|\mu_t^{_{exp}})\\
& \leq \gamma_t \sqrt{\frac{2(e-2)}{t \varepsilon_t}} \left ( \frac{KL(\rho_t^{_{exp}}\|\mu_t^{_{exp}})}{\sqrt{L_t}} + 4 \sqrt{L_t}\right).
\end{align*}
By reorganizing the terms:
\begin{equation}
KL(\rho_t^{_{exp}}\|\mu_t^{_{exp}}) \left (1 - \gamma_t \sqrt{\frac{2 (e-2)}{t \varepsilon_t L_t}} \right ) \leq 4 \gamma_t \sqrt{\frac{2(e-2)L_t}{t \varepsilon_t}}.
\label{eq:KLbound}
\end{equation}
Note that for our choice of $\gamma_t$ and $\varepsilon_t$:
\[
\gamma_t \sqrt{\frac{2(e-2)}{t \varepsilon_t L_t}} = \sqrt{\frac{2(e-2)K}{2 \ln(t+1) + \ln \frac{2}{\delta}}}.
\]
By simple algebraic manipulations we obtain that
\begin{equation}
\gamma_t \sqrt{\frac{2(e-2)}{t \varepsilon_t L_t}} \leq \frac{1}{2}
\label{eq:1-2}
\end{equation}
for
\[
t \geq K^{4(e-2)} \sqrt{\frac{\delta}{2}}.
\]

By substituting \eqref{eq:1-2} into \eqref{eq:KLbound} we obtain that:
\[
KL(\rho_t^{_{exp}}\|\mu_t^{_{exp}}) \leq 8 \gamma_t \sqrt{\frac{2(e-2)L_t}{t \varepsilon_t}}.
\]

By substituting this into \eqref{eq:D} we obtain
\begin{align*}
\Delta(\rho_t^{_{exp}}) &- \hat \Delta_t(\rho_t^{_{exp}}) \\
&\leq \sqrt{\frac{2(e-2)}{t \varepsilon_t}} \left (8 \gamma_t \sqrt{\frac{2(e-2)}{t \varepsilon_t}} + 2 \sqrt{L_t} \right ).
\end{align*}
For our choice of $\gamma_t$ and $\varepsilon_t$:
\[
\Delta(\rho_t^{_{exp}}) - \hat \Delta_t(\rho_t^{_{exp}}) \leq \frac{K^{1/3}}{t^{1/3}} \left ( 
\begin{array}{l}
16(e-2)\sqrt{\ln K}\\
\quad + 2\sqrt{2(e-2)L_t} 
\end{array}
\right)
\]
Substitution of the result into \eqref{eq:regret-delta} concludes the proof.
\end{proof}

\section*{Acknowledgments}

We thank Ronald Ortner for useful discussions. 

This work was supported in part by the IST Programme of the European Community, under the PASCAL2 Network of Excellence, IST-2007-216886, and by the European Community's Seventh Framework Programme (FP7/2007-2013), under grant agreement \textnumero 231495. This publication only reflects the authors' views.

\bibliography{bibliography}

\begin{thebibliography}{25}
\providecommand{\natexlab}[1]{#1}
\providecommand{\url}[1]{\texttt{#1}}
\expandafter\ifx\csname urlstyle\endcsname\relax
  \providecommand{\doi}[1]{doi: #1}\else
  \providecommand{\doi}{doi: \begingroup \urlstyle{rm}\Url}\fi

\bibitem[Audibert \& Bubeck(2009)Audibert and Bubeck]{AB09}
Audibert, Jean-Yves and Bubeck, S\'{e}bastien.
\newblock Minimax policies for adversarial and stochastic bandits.
\newblock In \emph{Proceedings of the International Conference on Computational
  Learning Theory (COLT)}, 2009.

\bibitem[Auer \& Ortner(2010)Auer and Ortner]{AO10}
Auer, Peter and Ortner, Ronald.
\newblock {UCB} revisited: Improved regret bounds for the stochastic
  multi-armed bandit problem.
\newblock \emph{Periodica Mathematica Hungarica}, 61\penalty0 (1-2):\penalty0
  55--65, 2010.

\bibitem[Auer et~al.(2002)Auer, Cesa-Bianchi, Freund, and Schapire]{ACB+02}
Auer, Peter, Cesa-Bianchi, Nicol{\`o}, Freund, Yoav, and Schapire, Robert~E.
\newblock The nonstochastic multiarmed bandit problem.
\newblock \emph{SIAM Journal of Computing}, 32\penalty0 (1), 2002.

\bibitem[Banerjee(2006)]{Ban06}
Banerjee, Arindam.
\newblock On {B}ayesian bounds.
\newblock In \emph{Proceedings of the International Conference on Machine
  Learning (ICML)}, 2006.

\bibitem[Beygelzimer et~al.(2009)Beygelzimer, Dasgupta, and Langford]{BDL09}
Beygelzimer, Alina, Dasgupta, Sanjoy, and Langford, John.
\newblock Importance weighted active learning.
\newblock In \emph{Proceedings of the International Conference on Machine
  Learning (ICML)}, 2009.

\bibitem[Beygelzimer et~al.(2010)Beygelzimer, Langford, Li, Reyzin, and
  Schapire]{BLL+10}
Beygelzimer, Alina, Langford, John, Li, Lihong, Reyzin, Lev, and Schapire,
  Robert~E.
\newblock Contextual bandit algorithms with supervised learning guarantees.
\newblock http://arxiv.org/abs/1002.4058, 2010.

\bibitem[Donsker \& Varadhan(1975)Donsker and Varadhan]{DV75}
Donsker, Monroe~D. and Varadhan, S.R.~Srinivasa.
\newblock Asymptotic evaluation of certain {Markov} process expectations for
  large time.
\newblock \emph{Communications on Pure and Applied Mathematics}, 28, 1975.

\bibitem[Dupuis \& Ellis(1997)Dupuis and Ellis]{DE97}
Dupuis, Paul and Ellis, Richard~S.
\newblock \emph{A Weak Convergence Approach to the Theory of Large Deviations}.
\newblock Wiley-Interscience, 1997.

\bibitem[Fard \& Pineau(2010)Fard and Pineau]{FP10}
Fard, Mahdi~Milani and Pineau, Joelle.
\newblock {PAC-Bayesian} model selection for reinforcement learning.
\newblock In \emph{Advances in Neural Information Processing Systems (NIPS)},
  2010.

\bibitem[Germain et~al.(2009)Germain, Lacasse, Laviolette, and
  Marchand]{GLLM09}
Germain, Pascal, Lacasse, Alexandre, Laviolette, Fran{\c c}ois, and Marchand,
  Mario.
\newblock {PAC-B}ayesian learning of linear classifiers.
\newblock In \emph{Proceedings of the International Conference on Machine
  Learning (ICML)}, 2009.

\bibitem[Gray(2011)]{Gra11}
Gray, Robert~M.
\newblock \emph{Entropy and Information Theory}.
\newblock Springer, 2 edition, 2011.

\bibitem[Higgs \& Shawe-Taylor(2010)Higgs and Shawe-Taylor]{HST10}
Higgs, Matthew and Shawe-Taylor, John.
\newblock A {PAC-Bayes} bound for tailored density estimation.
\newblock In \emph{Proceedings of the International Conference on Algorithmic
  Learning Theory (ALT)}, 2010.

\bibitem[Langford \& Shawe-Taylor(2002)Langford and Shawe-Taylor]{LST02}
Langford, John and Shawe-Taylor, John.
\newblock {PAC-Bayes} \& margins.
\newblock In \emph{Advances in Neural Information Processing Systems (NIPS)},
  2002.

\bibitem[Lever et~al.(2010)Lever, Laviolette, and Shawe-Taylor]{LLST10}
Lever, Guy, Laviolette, Fran\c{c}ois, and Shawe-Taylor, John.
\newblock Distribution-dependent {PAC-Bayes} priors.
\newblock In \emph{Proceedings of the International Conference on Algorithmic
  Learning Theory (ALT)}, 2010.

\bibitem[McAllester(1998)]{McA98}
McAllester, David.
\newblock Some {PAC-B}ayesian theorems.
\newblock In \emph{Proceedings of the International Conference on Computational
  Learning Theory (COLT)}, 1998.

\bibitem[McAllester(2003)]{McA03b}
McAllester, David.
\newblock Simplified {PAC-Bayesian} margin bounds.
\newblock In \emph{Proceedings of the International Conference on Computational
  Learning Theory (COLT)}, 2003.

\bibitem[McAllester(2007)]{McA07}
McAllester, David.
\newblock Generalization bounds and consistency for structured labeling.
\newblock In Bakir, G\"{o}khan, Hofmann, Thomas, Sch\"{o}lkopf, Bernhard,
  Smola, Alexander, Taskar, Ben, and Vishwanathan, S.V.N. (eds.),
  \emph{Predicting Structured Data}. The MIT Press, 2007.

\bibitem[Seeger(2002)]{See02}
Seeger, Matthias.
\newblock {PAC-Bayesian} generalization error bounds for {Gaussian} process
  classification.
\newblock \emph{Journal of Machine Learning Research}, 2002.

\bibitem[Seldin \& Tishby(2010)Seldin and Tishby]{ST10}
Seldin, Yevgeny and Tishby, Naftali.
\newblock {PAC-Bayesian} analysis of co-clustering and beyond.
\newblock \emph{Journal of Machine Learning Research}, 11, 2010.

\bibitem[Seldin et~al.(2011)Seldin, Laviolette, Shawe-Taylor, Peters, and
  Auer]{SLST+11}
Seldin, Yevgeny, Laviolette, Fran\c{c}ois, Shawe-Taylor, John, Peters, Jan, and
  Auer, Peter.
\newblock {PAC-Bayesian} analysis of martingales and multiarmed bandits.
\newblock http://arxiv.org/abs/1105.2416, 2011.

\bibitem[Shawe-Taylor \& Williamson(1997)Shawe-Taylor and Williamson]{STW97}
Shawe-Taylor, John and Williamson, Robert~C.
\newblock A {PAC} analysis of a {Bayesian} estimator.
\newblock In \emph{Proceedings of the International Conference on Computational
  Learning Theory (COLT)}, 1997.

\bibitem[Shawe-Taylor et~al.(1998)Shawe-Taylor, Bartlett, Williamson, and
  Anthony]{ST+98}
Shawe-Taylor, John, Bartlett, Peter~L., Williamson, Robert~C., and Anthony,
  Martin.
\newblock Structural risk minimization over data-dependent hierarchies.
\newblock \emph{IEEE Transactions on Information Theory}, 44\penalty0 (5),
  1998.

\bibitem[Srinivas et~al.(2010)Srinivas, Krause, Kakade, and Seeger]{SKKS10}
Srinivas, Niranjan, Krause, Andreas, Kakade, Sham~M., and Seeger, Matthias.
\newblock Gaussian process optimization in the bandit setting: No regret and
  experimental design.
\newblock In \emph{Proceedings of the International Conference on Machine
  Learning (ICML)}, 2010.

\bibitem[Sutton \& Barto(1998)Sutton and Barto]{SB98}
Sutton, Richard~S. and Barto, Andrew~G.
\newblock \emph{Reinforcement Learning: An Introduction}.
\newblock MIT Press, 1998.

\bibitem[Tishby \& Polani(2010)Tishby and Polani]{TP10}
Tishby, Naftali and Polani, Daniel.
\newblock Information theory of decisions and actions.
\newblock In Cutsuridis, Vassilis, Hussain, Amir, Taylor, John~G., and Polani,
  Daniel (eds.), \emph{Perception-Reason-Action Cycle: Models, Algorithms and
  Systems}. Springer, 2010.

\end{thebibliography}
\bibliographystyle{icml2011}

\end{document}